\newcommand{\Rmnum}[1]{\expandafter\@slowromancap\romannumeral #1@}
\begin{document}
\newcommand{\tabincell}[2]{\begin{tabular}{@{}#1@{}}#2\end{tabular}}

\title{Physically Informed Synchronic-adaptive Learning for Industrial Systems Modeling in Heterogeneous Media with Unavailable Time-varying  Interface}

\author{Aina Wang, 
        Pan Qin, 
        Xi-Ming Sun,~\IEEEmembership{Senior Member,~IEEE}}

\maketitle

{\begin{abstract}
Partial differential equations (PDEs) are commonly employed to model complex industrial systems characterized by multivariable dependence. Existing physics-informed neural networks (PINNs) excel in solving PDEs in a homogeneous medium.
However, their feasibility is diminished when  PDE parameters are  unknown due to a lack of physical attributions and time-varying interface is unavailable arising from heterogeneous media.
To this end, we propose a data-physics-hybrid method, physically informed synchronic-adaptive learning (PISAL), to solve PDEs for industrial systems modeling in heterogeneous media.
First, $Net_1$, $Net_2$, and $Net_I$, are constructed to approximate the solutions satisfying PDEs and the  interface.
$Net_1$ and $Net_2$ are utilized to synchronously learn each solution satisfying PDEs with diverse parameters, while 
$Net_I$ is employed to adaptively learn the unavailable time-varying interface.
Then, a criterion combined with $Net_I$ is introduced to adaptively distinguish the attributions of measurements and collocation points.
Furthermore, $Net_I$ is integrated into a data-physics-hybrid loss function. 
Accordingly, a synchronic-adaptive learning (SAL) strategy is proposed to decompose and optimize  each subdomain.
Besides,  we theoretically prove the approximation capability of PISAL.   
Extensive experimental results verify that the proposed PISAL can be used for industrial systems modeling in heterogeneous media, which faces the challenges of lack of physical attributions and unavailable time-varying interface.
\end{abstract}
\def\abstractname{Note to Practitioners}
\begin{abstract}
The motivation behind this paper is to devise a method for industrial systems modeling, even in cases where unknown PDE parameters are caused by a lack of physical attributions and an unavailable  time-varying interface is caused by heterogeneous media. 
Existing methods apply domain decomposition technique under the assumption that the interface is available.
To this end,  a data-physics-hybrid method, PISAL, in which $Net_1$, $Net_2$,  and $Net_I$  with SAL strategy are proposed to solve PDEs for industrial system modeling in heterogeneous media.
$Net_1$, $Net_2$, and $Net_I$ are first constructed to learn the solutions satisfying PDEs and the time-varying interface.
$Net_1$ and $Net_2$ are for synchronously learning each solution satisfying PDEs with diverse parameters; $Net_I$ is for adaptively learning the unavailable  time-varying interface.  
Subsequently, a criterion combined with $Net_I$  is introduced, which is to adaptively distinguish different physical attributions of measurements and collocation points. 
Additionally, $Net_I$ is integrated into a data-physics-hybrid loss function.  
Accordingly, a SAL strategy is proposed to decompose and optimize  each subdomain. 
Besides, we theoretically prove the approximation capability of PISAL.   
To validate the efficacy of the proposed PISAL,  the classical two-phase Stefan problem and the mixed Navier-Stokes problem are employed.
Results highlight the feasibility of our method in industrial systems modeling with the abovementioned challenges and demonstrate some comparisons with relevant state-of-the-art approaches.
Thus, the proposed method is suitable for industrial systems modeling applications.
In future research, we intend to conduct the proposed PISAL on an experimental platform to further verify the feasibility in practical scenarios.
\end{abstract}}

\begin{IEEEkeywords}
Heterogeneous  media with unavailable time-varying interface,
industrial system modeling,
partial differential equations with unknown parameters,
physically informed synchronic-adaptive learning,  synchronic-adaptive learning strategy.
\end{IEEEkeywords}
\IEEEpeerreviewmaketitle

\section{Introduction}\label{se1}

\IEEEPARstart
{D}{ifferential} equations are often used to model industrial systems.
Achievements have been made based on ordinary differential equations (ODEs), which focus on time series data\cite{10304675}.
However,  as far as spatiotemporal dependence is involved in industrial systems, ODEs cannot easily handle such problems.
Partial differential equations (PDEs) are one of the mathematical models for describing spatiotemporal dependence in various areas\cite{9516587}, such as thermal processes \cite{yu2022hybrid}, transport processes\cite{feng2023computation}, and vibration processes\cite{ji2022vibration}.
 Thus, PDEs greatly facilitate large-scale system control, monitoring, and other applications\cite{huang2023physical,5345678}.
Note that numerous real industrial systems concurrently exist in multiple physical domains with time-varying interfaces, which pose a challenge for industrial system modeling.  Such as the phase transitions in matter \cite{9852677,10234210}. Furthermore, the challenges of identifying PDE parameters posed by the lack of physical attributions are inevitable. These challenges are commonly referred to PDE inverse problems in heterogeneous media, encompassing areas such as medical sensing and imaging \cite{9462124,9695519}, as well as thermal management\cite{wang2019spatiotemporal}. 
Consequently, the methods for solving PDEs play a vital role in identifying PDE parameters within heterogeneous media.

Several classical numerical methods for solving PDEs have been successfully used in numerous fields, such as the finite difference method\cite{5345678} and finite element method\cite{4049770}.
However, efficiency and accuracy cannot be concurrently balanced well because of their mesh generation.
With the development of information technologies,
industrial data can be obtained by using a large number of sensors\cite{9423982}. 
Thus, data-driven approaches have made great achievements in industrial applications \cite{sun2021survey,10268910}.
Among data-driven methods, deep learning methods have numerous successful applications and well-known approximation ability \cite{sirignano2018dgm, lu2021deepxde}.
Thus, they have been motivated to applications of PDE solving.
However, the equipment of hard sensors is limited by the environment,  which poses a challenge for data acquisition\cite{yao2021figan, lui2022supervised, 10061539}.
Physics-informed neural networks (PINNs)\cite{raissi2019physics} provides a transformative direction for solving PDEs with 
small data regimes and physics-informed knowledge.
Consequently,  they are rapidly developed for solving PDEs, which are considered as a data-physics-hybrid method.
 \cite{9931982} proposed end-to-end PINNs for defect identification and 3-D reconstruction for metal structures. 
\cite{huang2022physics} proposed a physics-informed time-aware neural network method for solving industrial nonintrusive load monitoring problems.
\cite{jagtap2022physics} employed PINNs in supersonic flows. 
The abovementioned works focus on the applications in a homogeneous medium.
However, those methods cannot easily solve PDEs within heterogeneous media, which can happen in heat conduction in multilayer media\cite{zhang2022multi},  motion in different viscous fluid substances\cite{alhubail2022extended}, and transport modeling in spatial variability within heterogeneous systems\cite{aliakbari2023ensemble}. 

Several effective contributions have been made to  PDE solving  within heterogeneous media.
\cite{zhang2022multi} adopted the domain decomposition technique  with available interfaces  to divide the multilayer media into several subdomains and approximate the field on each subdomain with  fully connected neural networks 
(FCNNs).
\cite{alhubail2022extended} solved fluid flow in heterogeneous media using the extended PINNs, in which the domain decomposition technique divided the original heterogeneous domain into several homogeneous subdomains.
\cite{aliakbari2023ensemble} proposed an improved framework for transport modeling in heterogeneous media.
The abovementioned works made great improvements in  PDE solving in heterogeneous media with the assumption that the interface between neighboring domains is available. However, the interface cannot always be easily obtained in practice, such as radio wave nondestructive testing\cite{9632211}.
A prototype inverse Stefan problem with dynamic interactions between different material phases was considered, in which 
an indicator function was ensembled in original PINNs (indicator-PINNs)\cite{cai2021physics}. 
However, the parameters of PDEs are not fixed in the whole domain and will jump at the media interface.
\cite{li2021schwarz} proposed a Schwarz-type iterative method to decouple the different physical processes in each subdomain occupied by a single fluid, which allows solving in each subdomain with a single physical process described by a Stokes problem. However,  the transmission conditions implemented could cause the computational costs of the iterative method.
Therefore, the method for solving PDEs, which can be used for industrial systems modeling  applications within heterogeneous media facing the challenges of unknown physical attributions and unavailable time-varying interface,  should be further investigated.

To address the challenges of unknown PDE parameters caused by a lack of physical attributions and unavailable time-varying interface caused by heterogeneous media, 
this article proposes a data-physics-hybrid modeling method, physically informed synchronic-adaptive learning (PISAL).
First,  $Net_1$, $Net_2$, and $Net_I$ are proposed for approximating the solutions satisfying PDEs and the time-varying interface.    
$Net_1$ and $Net_2$ are for synchronously learning  each solution satisfying PDEs with diverse parameters;  $Net_I$  is for adaptively learning the time-varying interfaces.    
Then,  a criterion combined with $Net_I$ is proposed to adaptively distinguish the attributions of measurements and collocation points.
Furthermore, $Net_I$ is integrated into a data-physics-hybrid loss function. 
Accordingly, a synchronic-adaptive learning (SAL) strategy is proposed to decompose and optimize each subdomain.
Besides, the approximation capability of PISAL is theoretically proved.   
Finally, the classical two-phase Stefan problem and mixed Navier-Stokes problem are used to validate the effectiveness and the feasibility of the proposed PISAL, as well as some comparisons with relevant state-of-the-art methods are conducted.

The main contributions of this article can be summarized
as follows.
\begin{itemize}
\item[1)]
PISAL is a data-physics-hybrid method for industrial systems modeling to address the challenges posed by unknown PDE parameters  due to a lack of physical attributions and the unavailable  time-varying interface  due to heterogeneous media.
\item[2)]
We theoretically prove that the proposed PISAL can approximate the whole field with unknown physical attributions and an unavailable time-varying interface.
\item[3)]
The two-phase Stefan problem and the mixed Navier-Stokes problem are employed to validate the effectiveness and feasibility of our proposed method.
\end{itemize}

The rest of this article is organized as follows.
Section \ref{section:2} briefly reviews PINNs. 
Section \ref{section:Methods}
introduces our proposed approach for industrial systems modeling in a heterogeneous media, PISAL,  
including the main structure, decomposition criterion, learning strategy, and theoretical justification.
Section \ref{Illusive Example} conducts two classical examples to validate the accuracy and efficiency of the proposed method and provides discussions.
In Section \ref{Conclusion},
some concluding remarks are given.

\section{Preliminaries}\label{section:2}
\subsection{Brief Overview for PINNs}
We  first briefly review the basic idea of PINNs, in which the following PDEs are considered:\begin{equation}\label{eq:23111903}
\hspace{-1mm}\boldsymbol{u}_{h_t}(\boldsymbol{x}, t)\hspace{-0.6mm}+\hspace{-0.6mm}\mathcal{F}[\boldsymbol{u}_h(\boldsymbol{x}, t)]\hspace{-0.6mm}=\hspace{-0.6mm}\boldsymbol{g}_h(\boldsymbol{x}, t), \boldsymbol{x} \in \Omega \subseteq \mathbb{R}^{d}, t \in[0, T] \hspace{-0.6mm}\subset \hspace{-0.6mm}\mathbb{R}.
\end{equation}
Note that \eqref{eq:23111903} can be used to model industrial systems with multivariable dependence in a homogenesis medium.
Here, $\boldsymbol{x}$ is the spatial variable;
$t$ is the temporal variable;
the  domain $\Omega \subseteq \mathbb{R}^{d}$ is a spatial bounded open set with the smooth $(C^{1})$ boundary $\partial\Omega$.
Here, let $\Omega\cup\partial\Omega=\bar\Omega\subseteq\mathbb{R}^d$ denote the closure of $\Omega$.
Let $[0,T]\times\bar\Omega=\mathbb{D}\subset\mathbb{R}^{d+1}$ be the underlying space-time domain.
$\boldsymbol{u}_h: \mathbb{R}^{d}\times\mathbb{R}\rightarrow \mathbb{R}^{m}$  is the  solution with respect to spatiotemporal variables, $\boldsymbol{u}_h \in X$;
$\boldsymbol{g}_h: \mathbb{R}^{d}\times\mathbb{R}\rightarrow \mathbb{R}^{m}$ is the  source, $\boldsymbol{g}_h\in Y$; 
Let $X=L^{1}\left(\mathbb{D} ; \mathbb{R}^{m}\right), Y=L^{1}\left(\mathbb{D} ; \mathbb{R}^{m}\right)$  be underlying functional spaces;
$\mathcal{F}$ is a series of differential operators.

PINNs  can be  trained by minimizing the loss function
\begin{equation} \label{eq:23111904}
{{\rm {MSE}}_H} = {\rm {MSE}}_{DH}+ {\rm {MSE}}_{PH}.
\end{equation}
Here, ${\rm MSE}_{DH}$ is formulated as the following
\begin{equation}\label{eq:23111905}
{\rm MSE}_{DH}= \displaystyle \frac{1}{{\rm card}\left(D\right)}\underset{{(\boldsymbol{x}, t,u)\in D}}{\sum} \left(\hat{\boldsymbol{u}}_h\left(\boldsymbol{x},t;\boldsymbol{\Theta}_{U_h}\right) -{\boldsymbol{u}_h\left(\boldsymbol{x},t\right)}\right)^{2},
\end{equation}
where $D$ is the training dataset and  ${\rm card}\left(D\right)$ is the cardinality of $D$ ( Let card $(\cdot)$ denote the cardinality of set $\cdot$ in this article); 
$\hat {\boldsymbol{u}}_h\left(\boldsymbol{x},t;\boldsymbol{\Theta}_{U_h}\right)$ is the function of PINNs  to approximate the solution satisfying  \eqref{eq:23111903}, with $\boldsymbol{\Theta}_{U_h}$ being a set of parameters.
This mean-squared-error (MSE) term
\eqref{eq:23111905} is considered as a data-driven loss.

A residual function for \eqref{eq:23111903} is defined as the following
\begin{equation}\label{eq:forward2}
\boldsymbol{f}_h (\boldsymbol{x},t) = \boldsymbol{u}_{h_t}(\boldsymbol{x}, t)+\mathcal{F}[\boldsymbol{u}_h(\boldsymbol{x}, t)]-\boldsymbol{g}_h(\boldsymbol{x}, t).
\end{equation}
Consequently, ${\rm {MSE}}_{PH}$ is formulated as 
\begin{equation}\label{eq:20240120-wan01}
{\rm MSE}_{PH}= \displaystyle \frac{1}{{\rm card}\left(E\right)} \underset{(\boldsymbol{x}, t)\in {E}}{\sum} \boldsymbol{\hat f}_h\left(\boldsymbol{x}, t;\boldsymbol{\Theta}_{U_h}\right)^2,
\end{equation}
where $E$ is the set of collocation points; $\boldsymbol{\hat f}_h \left(\boldsymbol{x}, t;\boldsymbol{\Theta}_{U_h}\right)$ is an estimation to the residual function $\boldsymbol{f}_h(\boldsymbol{x},t)$ \eqref{eq:forward2} based on $\hat{\boldsymbol{u}}_h\left(\boldsymbol{x},t;\boldsymbol{\Theta}_{U_h}\right) $.
$\boldsymbol{\hat f}_h \left(\boldsymbol{x}, t;\boldsymbol{\Theta}_{U_h}\right)$ is obtained as the following
\begin{equation}\label{eq:f}
\hspace{-1.2mm}\boldsymbol{\hat f}_h \left(\boldsymbol{x}, t;\boldsymbol{\Theta}_{U_h}\right) \hspace{-0.5mm}=\hspace{-0.5mm}\hat{\boldsymbol{u}}_{h_t}\left(\boldsymbol{x},t;\boldsymbol{\Theta}_{U_h}\right) \hspace{-0.3mm}+\hspace{-0.3mm}\mathcal{F}\hspace{-0.3mm}\left[\boldsymbol{\hat u}_h\left(\boldsymbol{x},t;\boldsymbol{\Theta}_{U_h}\right) \right]\hspace{-0.3mm}-\hspace{-0.3mm}\boldsymbol{g}_h(\boldsymbol{x}, t).\hspace{-0.3mm}
\end{equation} 
MSE$_{PH}$ \eqref{eq:20240120-wan01} is used to regularize $\boldsymbol{\hat u}_h\left(\boldsymbol{x},t;\boldsymbol{\Theta}_{U_h}\right) $ to satisfy  \eqref{eq:23111903}, 
which is considered as a physics-informed loss.
Readers are referred to \cite{raissi2019physics} for the details.

Note that  PDE in \eqref{eq:23111903} is well-posed. Accordingly,  
$\left\Vert\ \hat{\boldsymbol{u}}_{h_t}\left(\boldsymbol{x},t;\hat{\boldsymbol{\Theta}}_{U_h}\right)-\boldsymbol{u}_{h_t}\left(\boldsymbol{x},t\right)\right\Vert<\varepsilon$ for all $\varepsilon>0$ can be obtained according to our previous work \cite{wang2023coupled} with $X=L^{1}\left(\mathbb{D} ; \mathbb{R}^{m}\right)$ and $Y=L^{1}\left(\mathbb{D} ; \mathbb{R}^{m}\right)$.
\newtheorem{remark}{\bf Remark}
\begin{remark}\label{Rem:24011201}
Note that the property of $X$ with respect to completeness is available according to $L^{1}$ being Banach.
 $\hat{\boldsymbol{u}}_h\left(\boldsymbol{x},t;\boldsymbol{\Theta}_{U_h}\right)$ is the function of PINNs, with $\boldsymbol{\Theta}_{U_h}$ being a set of parameters.
Thus,
for all exact solutions $\boldsymbol{u}_h$ satisfying \eqref{eq:23111903},
there exists a series of functions $\left\{\hat{\boldsymbol{u}}_h\left(\boldsymbol{x},t;\boldsymbol{\Theta}_{U_h}^{(j)}\right)\right\}=\left\{\hat{\boldsymbol{u}}_h\left(\boldsymbol{x},t;\boldsymbol{\Theta}_{U_h}^{(j)}\right)\in X| j=1,2,\cdots\right\}$ to ensure $\left\Vert\hat{\boldsymbol{u}}_{h_t}\left(\boldsymbol{x},t;\boldsymbol{\Theta}_{U_h}^{(j)}\right)-\boldsymbol{u}_{h_t}\left(\boldsymbol{x},t\right)\right\Vert\rightarrow0\ (j\rightarrow  \infty)$.
Consequently, if PINNs is well-trained to achieve $\hat{\boldsymbol{\Theta}}_{U_h}^{(j)}$ using iterative optimization methods to ensure $\hat{\boldsymbol{u}}_{h}\left(\boldsymbol{x},t;\hat{\boldsymbol{\Theta}}_{U_h}^{(j)}\right)$ satisfying \eqref{eq:23111903}, PINNs can be used for industrial system modeling applications within a homogenesis medium.
\end{remark}
\section{Methods}\label{section:Methods}
\subsection{PDEs for Modeling Industrial Systems  in Heterogeneous Media}
In this article, we consider PDEs for modeling industrial systems in  heterogeneous media are formulated as the following
\begin{subequations}\label{eq:231120-6}
\begin{align}
&\hspace{-3mm} \boldsymbol{u}_{1t}(\boldsymbol{x}, t)+\mathcal{N}[\boldsymbol{u}_1(\boldsymbol{x}, t);{\lambda}_{1}]=\boldsymbol{g}_1(\boldsymbol{x}, t),\hspace{-1mm}\ (\boldsymbol{x},t)\in \Omega_1\times [0, T], \label{eq:231120-01}
\\
&\hspace{-3mm} \boldsymbol{u}_{2t}(\boldsymbol{x}, t)+\mathcal{N}[\boldsymbol{u}_2(\boldsymbol{x}, t);{\lambda}_{2}]=\boldsymbol{g}_2(\boldsymbol{x}, t),\hspace{-1mm} \ (\boldsymbol{x},t)\in \Omega_2	\times [0, T]. \label{eq:231120-02}
\end{align}
\end{subequations}
Here, $\Omega=\Omega_1\cup\Omega_2$; the interiors of $\Omega_1$ and $\Omega_1$ satisfy $\text{int }  \Omega_2\cap \text{int } \Omega_2=\varnothing$; the interface $I=\partial\Omega_1\cap\partial\Omega_2$;
$\Omega_1$ and $\Omega_2$ with diverse physical attributions.
Thus, the PDE parameters 
$\lambda_1$ and $\lambda_2$ in the whole domain with $\lambda_1\neq \lambda_2$.
Note that $\Omega_i$ for $i=1,2$ are not fixed, which shows variation with respect to $t$.
Accordingly, the interface $I$ is time-varying.
Let $i$ denote the index of the $i^{\rm {th}}$ subdomain in this article;
$\boldsymbol{u}_i: \mathbb{R}^{d}\times\mathbb{R}\rightarrow \mathbb{R}^{m}$ denotes a  solution in space-time domain $\mathbb{D}_i$, $\boldsymbol{u}_i \in X_i$;
$\boldsymbol{g}_i: \mathbb{R}^{d}\times\mathbb{R}\rightarrow \mathbb{R}^{m}$ denotes a  source, $\boldsymbol{g}_i \in Z_i$;
Let $X_i=L^{1}\left(\mathbb{D}_i ; \mathbb{R}^{m}\right) $, $Z_i=L^{1}\left(\mathbb{D}_i^{'} ; \mathbb{R}^{m}\right) $ be underlying functional space for any observation subdomain $\mathbb{D}_{i}^{'}$, which is dense in space-time domian $\mathbb{D}_{i}$;
$\boldsymbol{\mathcal{N}}[\cdot ]$ is a series of  restriction operators. 
The other variables are similar to \eqref{eq:23111903}.
The exact physical attributions, i.e. the PDE parameters $\lambda_1$ and $\lambda_2$ are unknown in this article by considering the practical situations, such as a lack of thorough understanding of complex thermal systems\cite{wang2019spatiotemporal}.
\begin{remark}\label{Rem:24011202}
Note that the unknown $\lambda_i$ leads to \eqref{eq:231120-6}
will be ill-posed and the unique $\boldsymbol{u}_i$ to \eqref{eq:231120-6} cannot be guaranteed  in $\mathbb{D}_i$, where $i=1,2$.
Furthermore,  the observation subdomain $\mathbb{D}_{i}^{'}$ is dense in space-time domain $\mathbb{D}_i$.
Thus,
for all exact solutions $\forall\boldsymbol{u}_{i} \in \mathbb{D}_i$ satisfying \eqref{eq:231120-6}, there exists a set of measurements $\left\{\boldsymbol{u}_{i}^{(n)} \in \mathbb{D}^{'}_{i} | n=1,2,\cdots\right\}$ to ensure $\displaystyle\lim_{n\rightarrow\infty}\left\|\boldsymbol{u}_{i}^{(n)}-\boldsymbol{u}_{i}\right\| \rightarrow 0$.
Consequently, solving the inverse problem \eqref{eq:231120-6} in the whole space-time domain $\mathbb{D}$, consider the measurement $\boldsymbol{u}_i$ in the observation subdomain $\mathbb{D}^{'}_{i}\subset \mathbb{D}$.
The theory with respect to well-posedness is available according to\cite{mishra2020estimates}.
\end{remark}
Let $\mathcal{B}_1$  and $\mathcal{B}_2$ denote a series of differential operators that act on the interface.
Physically, \eqref{eq:231120-01} and \eqref{eq:231120-02}  naturally meet an interface condition 
\begin{equation}\label{eq:231120-03}
\mathcal{B}_1[\boldsymbol{u}_1(\boldsymbol{u}_I, t);\lambda_1]=\mathcal{B}_2[\boldsymbol{u}_2(\boldsymbol{u}_I, t);\lambda_2]
\end{equation}
to allow for information to communicate across the interface between $\Omega_1$ and $\Omega_2$.
Two residual functions are defined for \eqref{eq:231120-6} as the following:\begin{subequations}\label{eq:231120-04}
\begin{align}
& \boldsymbol{f}_1(\boldsymbol{x}, t;\lambda_1)=\boldsymbol{u}_{1t}(\boldsymbol{x}, t)+\mathcal{N}[\boldsymbol{u}_1(\boldsymbol{x}, t);{\lambda}_{1}]-\boldsymbol{g}_1(\boldsymbol{x}, t)\label{eq:231120-04-01},\\
&\boldsymbol{f}_2(\boldsymbol{x}, t;\lambda_2)=\boldsymbol{u}_{2t}(\boldsymbol{x}, t)+\mathcal{N}[\boldsymbol{u}_2(\boldsymbol{x}, t);{\lambda}_{2}]-\boldsymbol{g}_2(\boldsymbol{x}, t).\label{eq:231120-04-02}
\end{align}
\end{subequations}
A residual function is defined for \eqref{eq:231120-03}
 as the following:
 \begin{equation}\label{eq:231120-05}
\boldsymbol{f}_I(\boldsymbol{x}, t;\lambda_1,\lambda_2)=\mathcal{B}_1[\boldsymbol{u}_1(\boldsymbol{u}_I, t);\lambda_1]-\mathcal{B}_2[\boldsymbol{u}_2(\boldsymbol{u}_I, t);\lambda_2].
\end{equation}
Consequently, $ \boldsymbol{f}$ is defined as the following
\begin{equation}\label{eq:231212-12}
  \boldsymbol{f}= \boldsymbol{f}_1+ \boldsymbol{f}_2+ \boldsymbol{f}_I.\end{equation}
\subsection{PISAL for Solving PDEs with Heterogeneous Media}
A data-physics-hybrid method, PISAL, is first proposed to synchronously learn the fields corresponding to governing equations \eqref{eq:231120-01} and \eqref{eq:231120-02}.
Furthermore, the unavailable time-vary interface can be adaptively identified.
The proposed PISAL includes three neural networks:
1) $Net_I$ is for learning the time-varying interface $I$ satisfying \eqref{eq:231120-03};
2) $Net_1$ is for learning the  solution satisfying \eqref{eq:231120-01};
3) $Net_2$ is for learning the solution satisfying \eqref{eq:231120-02}.

Available hard sensors on $\bar\Omega=\Omega\cup \partial \Omega$ offer training dataset $D$, which is divided into $D = D_{1}\cup D_{2}$ with $D_{1}\cap D_{2}=D_{I}$.
$D_{1}$ and $D_{2}$ denote the training dataset sampled from $\Omega_1$ and $\Omega_2$, respectively;
$D_{I}$ denotes the training dataset sampled from the time-varying interface $I$. 
Note that the collocation point set $E$ does not necessarily correspond to $D$.
Then, $E=E_1\cup E_2$ is randomly sampled from $\Omega$ with  $E_1 \cap E_2=E_I$. $E_{1}$ and $E_{2}$ denote the collocation point sets sampled from $\Omega_1$ and $\Omega_2$, respectively;
$E_I$ denotes the collocation point set sampled from  $I$.

Consequently, PISAL  can be  trained by minimizing the loss function
\begin{equation} \label{eq:23111906}
{{\rm {MSE}}_M} = {\rm {MSE}}_{DM}+ {\rm {MSE}}_{PM}.
\end{equation}
${\rm MSE}_{DM}$ and  ${\rm MSE}_{PM}$ are the data-driven loss and physics-informed loss for training PISAL, respectively.
${\rm MSE}_{DM}$ is formulated as the following
\begin{equation}
\begin{aligned}\label{eq:23111907}
{\rm MSE}_{DM}&={\rm MSE}_{DM_U}+{\rm MSE}_{DM_I}.
\end{aligned}
\end{equation}
${\rm MSE}_{DM_{U}}$  and  ${\rm MSE}_{DM_I}$ are the data-driven loss for  training $Net_i$ ($i$=1,2) and  $Net_I$, respectively. They are formulated as the following:
\begin{equation}\label{eq:24011901}
\begin{aligned}
\hspace{-2mm}{\rm MSE}_{DM_U}&={\rm MSE}_{DM_{U1}}+{\rm MSE}_{DM_{U2}}\\
&=\displaystyle \frac{1}{{\rm card}\left(D_1\right)}\underset{{(\boldsymbol{x}, t,\boldsymbol{u})\in D_1}}{\sum} \hspace{-3mm}\left( \hat {\boldsymbol{u}}_1\left(\boldsymbol{x},t;\boldsymbol{\Theta}_{U_1}\right) \hspace{-0.5mm}-\hspace{-0.5mm}{\boldsymbol{u}_1\left(\boldsymbol{x},t\right)}\right)^{2}\\
&\hspace{2mm}+\displaystyle \frac{1}{{\rm card}\left(D_2\right)}\underset{{(\boldsymbol{x}, t,\boldsymbol{u})\in D_2}}{\sum} \hspace{-3mm}\left( \hat {\boldsymbol{u}}_2\left(\boldsymbol{x},t;\boldsymbol{\Theta}_{U_2}\right) \hspace{-0.5mm}-\hspace{-0.5mm}{\boldsymbol{u}_2\left(\boldsymbol{x},t\right)}\right)^{2},
\end{aligned}
\end{equation}
\vspace{-1mm}
\begin{equation}
{\rm MSE}_{DM_I}\hspace{-0.5mm}=\hspace{-0.5mm}\displaystyle \frac{1}{{\rm card}\left(D_I\right)}\underset{{(\boldsymbol{x}, t,{\boldsymbol{u}})\in D_I}}{\sum} \hspace{-2mm}\left( \hat {\boldsymbol{u}}_I\left(\boldsymbol{x},t;\boldsymbol{\Theta}_{U_I}\right) \hspace{-0.5mm}-\hspace{-0.5mm}{\boldsymbol{u}_I\left(\boldsymbol{x},t\right)}\right)^{2}.
\end{equation}
Here, $\hat {\boldsymbol{u}}_1\left(\boldsymbol{x}, t;\boldsymbol{\Theta}_{U_1}\right)$ is the function of $Net_1$, with $\boldsymbol{\Theta}_{U_1}$ being  a set of parameters; 
$\hat {\boldsymbol{u}}_2\left(\boldsymbol{x}, t;\boldsymbol{\Theta}_{U_2}\right)$ is the function of $Net_2$, with $\boldsymbol{\Theta}_{U_2}$ being  a set of parameters; and $\hat {\boldsymbol{u}}_I\left(\boldsymbol{x}, t;\boldsymbol{\Theta}_{U_I}\right)$ is the function of $Net_I$, with $\boldsymbol{\Theta}_{U_I}$ being  a set of parameters.
The differentials involved in the loss functions are numerically computed by using automatic differentiation \cite{baydin2018automatic}.
${\rm {MSE}}_{PM}$ is formulated as the following
\begin{equation}\label{eq:20221111-qp1}
\begin{aligned}
{\rm MSE}_{PM}&= {\rm MSE}_{PM_U}+{\rm MSE}_{PM_I},
\end{aligned}
\end{equation}
which is used to
regularize $\hat{ \boldsymbol{u}}_i\left(\boldsymbol{x},t;\boldsymbol{\Theta}_{U_i}\right)$ ($i$=1,2) and $\hat{ \boldsymbol{u}}_I\left(\boldsymbol{x},t;\boldsymbol{\Theta}_{U_I}\right)$ to satisfy \eqref{eq:231120-6} and \eqref{eq:231120-03}, respectively.
${\rm MSE}_{PM_U}$ and ${\rm MSE}_{PM_I}$ are physics-informed loss for training $Net_i$ ($i$=1,2) and $Net_I$, respectively. 
They can be obtained as the following 
\begin{equation}\label{eq:24011902}
\begin{aligned}
{\rm MSE}_{PM_U}&={\rm MSE}_{PM_{U1}}+{\rm MSE}_{PM_{U2}}\\
&=\displaystyle \frac{1}{{\rm card}\left(E_1\right)} \underset{(\boldsymbol{x}, t)\in {E_1}}{\sum}\hat {\boldsymbol{f}}_1\left(\boldsymbol{x}, t;\boldsymbol{\Theta}_{U_1};\lambda_1\right)^2\\
 &\hspace{2mm}+\displaystyle \frac{1}{{\rm card}\left(E_2\right)} \underset{(\boldsymbol{x}, t)\in {E_2}}{\sum}\hat {{\boldsymbol{f}}}_2\left(\boldsymbol{x}, t;\boldsymbol{\Theta}_{U_2};\lambda_2\right)^2,
 \end{aligned}
\end{equation}
\vspace{-1mm}
 \begin{equation}
 {\rm MSE}_{PM_I}=\displaystyle \frac{1}{{\rm card}\left(E_I\right)} \underset{(\boldsymbol{x}, t)\in {E_I}}{\sum}\hat {{\boldsymbol{f}}}_I\left(\boldsymbol{x}, t;\boldsymbol{\Theta}_{U_I};\lambda_1,\lambda_2\right)^2.
\end{equation}
Here, $\hat{{\boldsymbol{f}}}_1\left(\boldsymbol{x}, t;\boldsymbol{\Theta}_{U_1};\lambda_1\right)$ denotes a physics-informed approximation error for $Net_1$, an estimation to the residual function ${\boldsymbol{f}_1}(\boldsymbol{x},t;\lambda_1)$ \eqref{eq:231120-04-01} based on $\hat{\boldsymbol{u}}_1\left(\boldsymbol{x},t;\boldsymbol{\Theta}_{U_1}\right)$. $\hat{{\boldsymbol{f}}}_1\left(\boldsymbol{x}, t;\boldsymbol{\Theta}_{U_1};\lambda_1\right)$  is obtained as the following
\begin{equation}\notag
\begin{aligned}
&\hat{{\boldsymbol{f}}}_1\left(\boldsymbol{x}, t;\boldsymbol{\Theta}_{U_1};\lambda_1\right)\\
&\hspace{6mm}=\hat{\boldsymbol{u}}_{1t}(\boldsymbol{x},t;\boldsymbol{\Theta}_{U_1})+\mathcal{N}[\hat {\boldsymbol{u}}_1\left(\boldsymbol{x},t;\boldsymbol{\Theta}_{U_1}\right);{\lambda}_{1}]-\boldsymbol{g}_1(\boldsymbol{x}, t);
 \end{aligned}
\end{equation}
{${\boldsymbol{f}}_2\left(\boldsymbol{x}, t;\boldsymbol{\Theta}_{U_2};\lambda_2\right)$ 
denotes a physics-informed approximation error for $Net_2$, an estimation to the residual function ${\boldsymbol{f}_2}(\boldsymbol{x},t;\lambda_2)$ \eqref{eq:231120-04-02} based on $\hat{\boldsymbol{u}}_2\left(\boldsymbol{x},t;\boldsymbol{\Theta}_{U_2}\right)$. $\hat{{\boldsymbol{f}}}_2\left(\boldsymbol{x}, t;\boldsymbol{\Theta}_{U_2};\lambda_2\right)$  is obtained as the following
\begin{equation}\notag
\begin{aligned}
& \hat{{\boldsymbol{f}}}_2\left(\boldsymbol{x}, t;\boldsymbol{\Theta}_{U_2};\lambda_2\right)\\
&\hspace{6mm}=\hat{\boldsymbol{u}}_{2t}(\boldsymbol{x},t;\boldsymbol{\Theta}_{U_2})+\mathcal{N}[\hat {\boldsymbol{u}}_2\left(\boldsymbol{x},t;\boldsymbol{\Theta}_{U_2}\right);{\lambda}_{2}]-\boldsymbol{g}_2(\boldsymbol{x}, t);
 \end{aligned}
\end{equation}}{$\hat {{\boldsymbol{f}}}_I\left(\boldsymbol{x}, t;\boldsymbol{\Theta}_{U_I};\lambda_1,\lambda_2\right)$}
denotes a physics-informed approximation error for $Net_I$, an estimation to the residual function ${\boldsymbol{f}_I}(\boldsymbol{x},t;\lambda_1,\lambda_2)$ \eqref{eq:231120-05} based on
$\hat {\boldsymbol{u}}\left(\boldsymbol{x},t;\boldsymbol{\Theta}_{U}\right)$. 
Here $\hat {\boldsymbol{u}}\left(\boldsymbol{x},t;\boldsymbol{\Theta}_{U}\right)=\left\{ \hat{\boldsymbol{u}}_1\left(\boldsymbol{x},t;\boldsymbol{\Theta}_{U_1}\right),\hat{\boldsymbol{u}}_2\left(\boldsymbol{x},t;\boldsymbol{\Theta}_{U_2}\right),\hat{\boldsymbol{u}}_I(\boldsymbol{x},t;\boldsymbol{\Theta}_I) \right\}$.
$\hat {{\boldsymbol{f}}}_I\left(\boldsymbol{x}, t;\boldsymbol{\Theta}_{U_I};\lambda_1,\lambda_2\right)$  is obtained as the following
\begin{equation}\notag
\begin{aligned}
& \hat {{\boldsymbol{f}}}_I\left(\boldsymbol{x}, t;\boldsymbol{\Theta}_{U_I};\lambda_1,\lambda_2\right)\\
&\hspace{12mm}= \mathcal{B}_1[\hat{\boldsymbol{u}}_1 (\hat{\boldsymbol{u}}_I(\boldsymbol{x},t;\boldsymbol{\Theta}_I),  t, \boldsymbol{\Theta}_{U_1} ); \lambda_1] \\
&\hspace{12mm}-\mathcal{B}_2[\hat{\boldsymbol{u}}_2 (\hat{\boldsymbol{u}}_I(\boldsymbol{x},t; \boldsymbol{\Theta}_I), t,  \boldsymbol{\Theta}_{U_2} ) ; \lambda_2 ].
\end{aligned}
\end{equation}
Let 
\begin{equation}\label{eq:24010802}
\begin{aligned}
\hat{\boldsymbol{e}}_{M}\left(\boldsymbol{x},t;\boldsymbol{\Theta}_{U}\right)
&= \hat {\boldsymbol{u}}\left(\boldsymbol{x},t;\boldsymbol{\Theta}_{U}\right) -{\boldsymbol{u}\left(\boldsymbol{x},t\right)}\\
&=\hat {\boldsymbol{u}}_1\left(\boldsymbol{x},t;\boldsymbol{\Theta}_{U_1}\right) -{\boldsymbol{u}_1\left(\boldsymbol{x},t\right)}\\&\hspace{1mm}+
\hat {\boldsymbol{u}}_2\left(\boldsymbol{x},t;\boldsymbol{\Theta}_{U_2}\right) -{\boldsymbol{u}_2\left(\boldsymbol{x},t\right)}\\
&\hspace{1mm}+\hat {\boldsymbol{u}}_I\left(\boldsymbol{x},t;\boldsymbol{\Theta}_{U_I}\right) -{\boldsymbol{u}_I\left(\boldsymbol{x},t\right)}\end{aligned},
\end{equation}
denote a data approximation error for PISAL.
Let 
\begin{equation}
\begin{aligned}\label{eq:240108}
 &\hat {\boldsymbol{f}}\left(\boldsymbol{x}, t;\boldsymbol{\Theta}_{U};\lambda_1,\lambda_2\right)\\
&= \boldsymbol{\hat u}_{1t}\left(\boldsymbol{x}, t;\boldsymbol{\Theta}_{U_1}\right)+\mathcal{N}[\boldsymbol{\hat u}_1(\boldsymbol{x}, t;\boldsymbol{\Theta}_{U_1});{\lambda}_{1}]-\boldsymbol{g}_1\left(\boldsymbol{x}, t\right)
\\
&+ \boldsymbol{\hat u}_{2t}\left(\boldsymbol{x}, t;\boldsymbol{\Theta}_{U_2}\right)+\mathcal{N}[\boldsymbol{\hat u}_2(\boldsymbol{x}, t;\boldsymbol{\Theta}_{U_2});{\lambda}_{2}]-\boldsymbol{g}_2\left(\boldsymbol{x}, t\right) \\
&+\mathcal{B}_1[\hat{\boldsymbol{u}}_1 (\hat{\boldsymbol{u}}_I(\boldsymbol{x},t; \boldsymbol{\Theta}_I),  t, \boldsymbol{\Theta}_{U_1});\lambda_1]\\
&-\mathcal{B}_2[\hat{\boldsymbol{u}}_2 (\hat{\boldsymbol{u}}_I(\boldsymbol{x},t; \boldsymbol{\Theta}_I), t, \boldsymbol{\Theta}_{U_2} ) ; \lambda_2 ].
\end{aligned}
\end{equation}
denote a physics-informed approximation error for PISAL, an estimation to $\boldsymbol{f}\left(\boldsymbol{x}, t;\lambda_1, \lambda_2\right)$ \eqref{eq:231212-12} based on $\hat {\boldsymbol{u}}\left(\boldsymbol{x},t;\boldsymbol{\Theta}_{U}\right)$.
In fact, the approximation $\hat {\boldsymbol{u}}=\left\{\hat {\boldsymbol{u}}_1,\hat {\boldsymbol{u}}_2,\hat {\boldsymbol{u}}_I\right\}$ can be obtained by  perturbing  \eqref{eq:231120-6} and \eqref{eq:231120-03} with $\hat{\boldsymbol{e}}_{M}$ \eqref{eq:24010802} and $\hat {\boldsymbol{f}}$ \eqref{eq:240108}.
\vspace{-2mm}
\subsection{SAL Strategy for Each Subdomain}
A decomposition criterion is defined to adaptively distinguish the attributions of the measurements and collocation points.
To further explicitly formulate the criterion,  let $\mathcal{S}=\mathcal{S}_1\cup \mathcal{S}_1$ with $\mathcal{S}_1$=$\left\{v\in\mathcal{S}_1| \left(x_v,t\right)\leq I \right\}$, $\mathcal{S}_2$=$\left\{v\in\mathcal{S}_2| \left(x_v,t\right)\geq I \right\}$, and $\left\{v\in\mathcal{S}_1\cap \mathcal{S}_2| \left(x_v,t\right)=I \right\}$.
The criterion is as follows:
\begin{equation} \label{eq:23111906}
\left\{
\begin{aligned}
&(\boldsymbol{x},t)\in \Omega_1\times [0, T],\  \lambda_i=\lambda_1, \ \forall v\in\mathcal{S}_1 \  \\
&(\boldsymbol{x},t)\in \Omega_2\times [0, T],\ \lambda_i=\lambda_2, \ \forall v\in\mathcal{S}_2
\end{aligned}.
\right.
\end{equation}Accordingly, SAL strategy is proposed to decompose and optimize each subdomain to iteratively estimate $\boldsymbol{\Theta}_1$, $\boldsymbol{\Theta}_2$, and  $\boldsymbol{\Theta}_{U_I}$.
Let $k$ denote the iternative step.
Suppose $\left\{\hat{\boldsymbol{\Theta}}_{U_1}^{(k)}, \hat{\lambda}_1^{(k)}\right\}$ 
and $\left\{\hat{\boldsymbol{\Theta}}_{U_2}^{(k)}, \hat{\lambda}_{2}^{(k)}\right\}$ are the iterations of  $Net_1$ and $Net
_2$ at $k^{\rm{th}}$ iterative step, respectively.
$\boldsymbol{\hat\Theta}_{U_I}^{(k+1)}$, $\boldsymbol{\hat\Theta}_{U_1}^{(k+1)}$, $\hat{\lambda}_1^{(k+1)}$,  $\boldsymbol{\hat\Theta}^{(k+1)}_{U_2}$, and $\hat{\lambda}_2^{(k+1)}$   can be obtained by solving the following optimization problem
\begin{equation}\label{eq:24010901}
\begin{aligned}    
\displaystyle
&\boldsymbol{\hat\Theta}_{U_I}^{(k+1)}
=\underset{\boldsymbol{\Theta}_{U_I}}{\arg \min }
\left\{{{\rm MSE}_{DM_I}\left(\boldsymbol{\Theta}_{U_I};\hat{\boldsymbol{\Theta}}_{U_1}^{(k)},\hat{\boldsymbol{\Theta}}_{U_2}^{(k)}  \right)}\right.\\
&\hspace{12mm}\left.+{{\rm MSE}_{PM_I}\left(\boldsymbol{\Theta}_{U_I};\hat{\boldsymbol{\Theta}}_{U_1}^{(k)},\hat{\lambda}_1^{(k)},\hat{\boldsymbol{\Theta}}_{U_2}^{(k)},\hat{\lambda}_2^{(k)}\right)}\right\}
\end{aligned}
\end{equation}
and 
\begin{equation}\label{eq:24010902}
\begin{aligned}    
\displaystyle
&\left\{\boldsymbol{\hat\Theta}_{U_1}^{(k+1)},\hat{\lambda}_1^{(k+1)},\boldsymbol{\hat\Theta}_{U_2}^{(k+1)},\hat{\lambda}_2^{(k+1)}\right\}\\
&=\underset{\left\{{\boldsymbol{\Theta}}_{U_1},\lambda_1,\boldsymbol{\Theta}_{U_2},\lambda_2\right\}}{\arg \min }
\left\{{{\rm MSE}_{DM_U}\left(\boldsymbol{\Theta}_{U_1},\boldsymbol{\Theta}_{U_2};\hat{\boldsymbol{\Theta}}_{U_I}^{(k+1)} \right)}\right.\\&\hspace{5mm}+\left.{{\rm MSE}_{PM_U}\left({\boldsymbol{\Theta}}_{U_1},\lambda_1,\boldsymbol{\Theta}_{U_2},\lambda_2;\hat{\boldsymbol{\Theta}}_{U_I}^{(k+1)}\right)}\right\}
\end{aligned}.
\end{equation}
The unknown PDE parameters $\lambda_i$ for $i$ = 1, 2 can be concurrently identified by solving the optimization  problems \eqref{eq:24010901} and \eqref{eq:24010902}, which can be referred to solving PDE inverse problems.
Let 
\begin{subequations}\label{eq2401905}
\begin{align}
{\rm MSE}_{Ui}&={\rm MSE}_{DM_{Ui}}+{\rm MSE}_{PM_{Ui}}\label{eq24011903}, {\quad i=1,2}\\
&{\rm MSE}_I={\rm MSE}_{DM_I}+{\rm MSE}_{PM_I}\label{eq24011904},
\end{align}
\end{subequations}
where ${\rm MSE}_{DM_{Ui}}$ and ${\rm MSE}_{PM_{Ui}}$ have been defined by \eqref{eq:24011901} and \eqref{eq:24011902}, respectively. Consequently, the details of the SAL strategy are described in Algorithm \ref{alg:algorithm-label}.
The framework of PISAL for industrial system modeling in hetergerous media is shown in Fig.\ref{fig:PISAL}.
\begin{algorithm}
    \caption{{The SAL strategy to decompose and optimize each subdomain.}}
    \label{alg:algorithm-label}
    \begin{algorithmic}
   \STATE  \textbf{Initialize} ($k=0$)  the maximum iterations $k_{max}$ and threshold $\delta_{Train}$. \\
   1. Input spatial and temporal variables $\boldsymbol{x}$ and $t$.\\
   2. Randomly sample the training dataset $(\boldsymbol{x}, t, u)\in D$  and collocation points $(\boldsymbol{x}, t)\in E$ from the whole $\Omega$. 
   
3. Randomly  initialize $Net_1$ and $Net_2$ by using ${\boldsymbol{\Theta}}^{(0)}_{U_1}$ and ${\boldsymbol{\Theta}}^{(0)}_{U_2}$, respectively.
Meanwhile, randomly initialize $Net_I$ by using  ${\boldsymbol{\Theta}}^{(0)}_{U_I}$.
  \WHILE {$k\leq k_{max}$ or ${\rm{MSE}}_M^{k}\leq\delta_{Train}$}
   \STATE 4.  Train $Net_I$ based on $(\boldsymbol{x},t,u)\in D_I^{(k)}$ and $(\boldsymbol{x},t)\in E_I^{(k)}$ to obtain $\boldsymbol{\hat\Theta}_{U_I}^{(k+1)}$ by solving the optimization problem \eqref{eq:24010901}. Based on the iteration result $\boldsymbol{\hat\Theta}_{U_I}^{(k+1)}$ and decomposition criterion \eqref{eq:23111906}, divide training dataset $D$ and collocation points $E$ into $D_i^{(k+1)}$ and $E_i^{(k+1)}$, respectively.
 \IF{$D_I^{(k+1)}\neq \varnothing $ and $E_I^{(k+1)}\neq \varnothing$}
   \STATE {5.  Obtain $D_I^{(k+1)}$ and $E_I^{(k+1)}$ .}
\ELSE{\STATE 6. Return step 2.}
\ENDIF
    \STATE 7.   
Train $Net_1$ and  $Net_2$  by solving the optimization problem \eqref{eq:24010902} to obtain  $\left\{\boldsymbol{\hat\Theta}_{U_1}^{(k+1)},\hat{\lambda}_1^{(k+1)}\right\}$ and $\left\{\boldsymbol{\hat\Theta}_{U_2}^{(k+1)},\hat{\lambda}_2^{(k+1)}\right\}$, respectively.
Train $Net_1$ based on $(\boldsymbol{x},t,u)\in D_1^{(k+1)}$, $(\boldsymbol{x},t)\in E_1^{(k+1)}$, $(\boldsymbol{x},t,u) \in D_I^{(k+1)}$, and $(\boldsymbol{x},t) \in E_I^{(k+1)}$;
    Train $Net_2$ based on $(\boldsymbol{x},t,u)\in D_2^{(k+1)}$, $(\boldsymbol{x},t)\in E_2^{(k+1)}$,  $(\boldsymbol{x},t,u)\in D_I^{(k+1)}$, and $(\boldsymbol{x},t)\in E_I^{(k+1)}$.
  \STATE 8. Update $\boldsymbol{\Theta}_{U_i}^{(k+1)}$ and $\lambda_i^{(k+1)}$  according to ${\rm MSE}^{(k+1)}_{Ui}$ \eqref{eq24011903}. Update $\boldsymbol{\Theta}_I^{(k+1)}$ according to  ${\rm MSE}^{(k+1)}_{I}$ \eqref{eq24011904}.
\STATE 9. $k=k+1$.
         \ENDWHILE
    \STATE \textbf{Return} the output $\hat {\boldsymbol{u}}_1(\boldsymbol{x},t;\boldsymbol{\hat\Theta}_{U_1})$, $\hat {\boldsymbol{u}}_2(\boldsymbol{x},t;\boldsymbol{\hat\Theta}_{U_2})$, $\hat{\lambda}_1$, $\hat{\lambda}_2$, and $\hat {\boldsymbol{u}}_I(\boldsymbol{x},t;\boldsymbol{\hat\Theta}_{U_I})$.
    \end{algorithmic}
\end{algorithm} 
\begin{figure*}[hpt]
\begin{center}
\includegraphics[width=15cm]{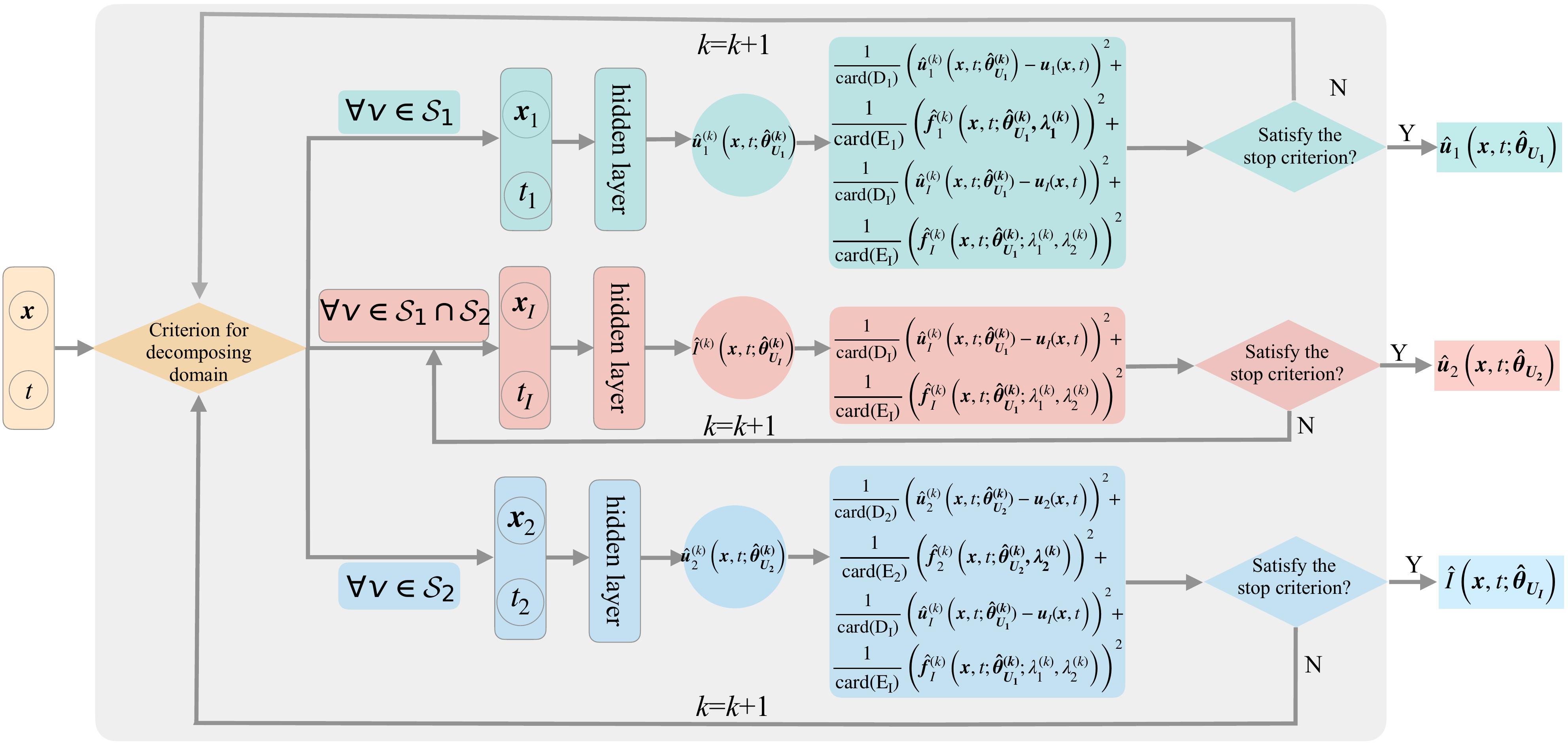}
\caption{Framework of PISAL for industrial system modeling in hetergerous media.}
\label{fig:PISAL}
\end{center}
\end{figure*}
{\subsection{Approximation Theorem for PISAL}\label{Subsection II-D}
This section concisely establishes the mathematical justification for the proposed PISAL.
To present the theoretical framework on the approximation error $\Vert \hat e_M \Vert$ \eqref{eq:24010802}, we first assume $\boldsymbol{u}_i$ satisfies the following well-posed condition  based on Remark \ref{Rem:24011202}.
\newtheorem{assumption}{Assumption}
\begin{assumption}\label{Assum:23121602}
For each function   $\boldsymbol{g}_{\boldsymbol{u}i}$ and $\boldsymbol{g}_{\boldsymbol{v}i}$ satisfying \eqref{eq:231120-6}, the corresponding  unique solutions $\boldsymbol{u}_i, \boldsymbol{v}_i \in X_i$ satisfy
\begin{equation}\label{eq:231217}
\begin{aligned}
\left\| {\boldsymbol{u}_{i}-\boldsymbol{v}_{i}}\right\|  \leq  & C \|\boldsymbol{g}_{\boldsymbol{u}i}-\boldsymbol{g}_{\boldsymbol{v}i}\| 
\end{aligned}
\end{equation}
 with a fixed and finite constant $0<C\in \mathbb{R}$ when 
the measurements sampled from any subset $\mathbb{D}_i^{'}\subset \mathbb{D}_i$.
Here,  $C$ can be referred to as the Lipschitz constant of PDEs \eqref{eq:231120-6}; $\|\cdot\|$ denotes the  $L^1$-norm.
\end{assumption}

To delve into the general approximation capability of the proposed PISAL, let the second power of $L^2$-norm $L_{DM}$ and $L_{PM}$ instead of MSE in \eqref{eq:23111907}
 and \eqref{eq:20221111-qp1}, respectively. 
Let $L_M$=$L_{DM}+L_{PM}$.
Consequently, the following theorem guarantees the general approximation capability of PISAL.
\newtheorem{theorem}{\bf Theorem}
\begin{theorem}\label{thm1} 
If the Assumption \ref{Assum:23121602} holds for any $\mathbb{D}^{'}\subset \mathbb{D}$.
Then, for all $\varepsilon>0$, there exists a $\delta>0$,
\begin{equation}
L_{M}<\delta \Longrightarrow \left\Vert\boldsymbol{\hat u}-\boldsymbol{u}\right\Vert<\varepsilon
\end{equation}
\end{theorem}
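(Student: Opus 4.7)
The plan is to prove Theorem 1 by combining the Lipschitz-type stability from Assumption 1 with the very definitions of the data-error $\hat{e}_M$ in \eqref{eq:24010802} and the physics residual $\hat{f}$ in \eqref{eq:240108}. First I would observe that, by construction, each approximation $\hat{\boldsymbol{u}}_i$ is the exact solution to a perturbed instance of \eqref{eq:231120-6} in which the source $\boldsymbol{g}_i$ has been replaced by $\boldsymbol{g}_i + \hat{\boldsymbol{f}}_i$, whereas the true solution $\boldsymbol{u}_i$ corresponds to the unperturbed source $\boldsymbol{g}_i$. The remark just before the theorem already hints at this reading: the pair $(\hat{\boldsymbol{u}},\hat{\boldsymbol{f}})$ arises from perturbing \eqref{eq:231120-6}--\eqref{eq:231120-03} by $(\hat{\boldsymbol{e}}_M,\hat{\boldsymbol{f}})$. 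Applying Assumption 1 to this pair of PDE instances on each subdomain $\mathbb{D}_i$ then yields the per-subdomain bound $\|\hat{\boldsymbol{u}}_i-\boldsymbol{u}_i\|\le C\,\|\hat{\boldsymbol{f}}_i\|$ in the $L^1$-norm built into the assumption.

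Next I would link these $L^1$-bounds to the $L^2$-squared quantities appearing in $L_{DM}$ and $L_{PM}$. Because each $\mathbb{D}_i$ (and the interface set) is bounded with finite measure, Hölder's inequality gives $\|\hat{\boldsymbol{f}}_i\|_{L^1}\le |\mathbb{D}_i|^{1/2}\|\hat{\boldsymbol{f}}_i\|_{L^2}$, and similarly for the interface residual $\hat{\boldsymbol{f}}_I$ using \eqref{eq:231120-05}. Combined with the Assumption~1 estimate, this produces
\begin{equation}
\|\hat{\boldsymbol{u}}_i-\boldsymbol{u}_i\| \;\le\; C\,|\mathbb{D}_i|^{1/2}\,L_{PM,i}^{1/2},\qquad i=1,2,
\end{equation}
together with an analogous bound for the interface discrepancy $\hat{\boldsymbol{u}}_I-\boldsymbol{u}_I$. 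Summing the three contributions that make up $\hat{\boldsymbol{e}}_M$ in \eqref{eq:24010802}, and absorbing the data-side mismatch through the direct estimate $\|\hat{\boldsymbol{u}}-\boldsymbol{u}\|_{L^1(\mathbb{D}')}\le |\mathbb{D}'|^{1/2}L_{DM}^{1/2}$ on the observation set, I would obtain an aggregate bound of the shape
\begin{equation}
\|\hat{\boldsymbol{e}}_M\| \;\le\; C_1\,L_{DM}^{1/2} + C_2\,L_{PM}^{1/2} \;\le\; C_3\,L_M^{1/2},
\end{equation}
with constants depending only on $C$ and the measures of the subdomains. Choosing $\delta=(\varepsilon/C_3)^2$ then immediately gives $L_M<\delta \Longrightarrow \|\hat{\boldsymbol{u}}-\boldsymbol{u}\|<\varepsilon$.

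The main obstacle I anticipate is the transition from Assumption~1, which is formulated for two exact solutions of the same (well-posed) inverse instance with different sources, to the perturbed PDE satisfied by $\hat{\boldsymbol{u}}_i$. In particular one must use the data-driven loss $L_{DM}$ to absorb the mismatch between $\hat{\boldsymbol{u}}_i$ and the measurements on the dense observation subdomain $\mathbb{D}_i'$ so that Assumption~1 can be invoked with a quantitative Lipschitz constant that is uniform across the decomposition. The interface contribution requires a separate but parallel stability argument, since the interface residual $\hat{\boldsymbol{f}}_I$ in \eqref{eq:231120-05} couples $\hat{\boldsymbol{u}}_1$, $\hat{\boldsymbol{u}}_2$ and $\hat{\boldsymbol{u}}_I$, and Assumption~1 is stated per subdomain rather than across the interface; here the transmission condition \eqref{eq:231120-03} must be used to propagate the estimate cleanly from the two bulk subdomains to the interface term before the final summation.
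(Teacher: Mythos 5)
Your proposal takes essentially the same route as the paper's own proof: both view $\hat{\boldsymbol{u}}$ as the exact solution of a source-perturbed instance of \eqref{eq:231120-6}--\eqref{eq:231120-03} (source $\boldsymbol{g}_i$ replaced by $\boldsymbol{g}_i+\hat{\boldsymbol{f}}_i$), invoke the Lipschitz stability of Assumption \ref{Assum:23121602} to obtain $\left\|\hat{\boldsymbol{e}}_M\right\|\leq C\left\|\hat{\boldsymbol{f}}\right\|$, pass from the $L^1$ bound to the $L^2$-squared loss $L_M$ via H\"older (this is exactly the paper's factor $C\left(T\,\mathrm{card}(\bar\Omega)\right)^{1/2}$), and conclude by choosing $\delta=\varepsilon^{2}/C_{GE}^{2}$. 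The only difference is presentational: you carry out the subdomain-by-subdomain and interface bookkeeping explicitly, whereas the paper compresses it into a single aggregated inequality citing its prior work.
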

\begin{proof}
According to Assumption \ref{Assum:23121602},
let 
$\boldsymbol{g}_{\boldsymbol{u}1}=\boldsymbol{\hat u}_{1t}\left(\boldsymbol{x}, t;\boldsymbol{\Theta}_{U_1}\right)+\mathcal{N}[\boldsymbol{\hat u}_1(\boldsymbol{x}, t;\boldsymbol{\Theta}_{U_1});{\lambda}_{1}]-\boldsymbol{\hat f}_{1}\left(\boldsymbol{x}, t;\boldsymbol{\Theta}_{U_1}\right)$ and $\boldsymbol{g}_{\boldsymbol{v}1}=\boldsymbol{g}_1$. Meanwhile,
$\boldsymbol{g}_{\boldsymbol{u}2}$, $\boldsymbol{g}_{\boldsymbol{v}2}$,  $\boldsymbol{g}_{\boldsymbol{u}I}$, and $\boldsymbol{g}_{\boldsymbol{v}I}$ can be similarly obtained.
Based on our previous work\cite{wang2023coupled},  there exists a finite Lipschitz constant $0<C\in \mathbb{R}$ satisfying
\begin{equation}\notag\label{23121601}
\begin{aligned}
\left\|\hat{\boldsymbol{u}}\left(\boldsymbol{x},t;\boldsymbol{\Theta}_{U}\right)-\boldsymbol{u}\left(\boldsymbol{x},t\right)\right\|&= \left\| \hat{\boldsymbol{e}}_M\right\|\leqslant C\left\|\hat{\boldsymbol{f}}\right\|\\
 &=C \left(T{\rm card}\left(\bar\Omega\right)\right)^{\frac{1}{2}}L_{M}^{\frac{1}{2}}\\
 &=C_{GE}L_M^{\frac{1}{2}}.
\end{aligned}
\end{equation}
with constants $C_{GE}=C\left(T{\rm card}\left(\bar\Omega\right)\right)^{\frac{1}{2}}$.
Consequently, let 
$$\delta=\frac{\varepsilon^{2}}{C_{GE}^{2}}
$$ 
for which $L_{M} < \delta$ yields 
$$\left\Vert\boldsymbol{\hat u}-\boldsymbol{u}\right\Vert \leq C_{GE} \delta^{\frac{1}{2}}=\varepsilon$$
\end{proof}

\begin{remark}\label{231217}
Let $\boldsymbol{\Theta}_{U_i}$ and $\boldsymbol{\Theta}_{U_I}$ denote the parameters of $\hat{\boldsymbol{u}}_i\left(x, t ; \boldsymbol{\Theta}_{U_i}\right)$ and $\hat{\boldsymbol{u}}_I\left(x, t ; \boldsymbol{\Theta}_{U_I}\right)$ for $i$ = 1, 2, respectively. Theorem  \ref{thm1}  indicates that there is a series of functions $\left\{\hat{\boldsymbol{u}}_1\left(x, t ; \boldsymbol{\Theta}_{U_i}^{(j)}\right),\hat{\boldsymbol{u}}_2\left(x, t ; \boldsymbol{\Theta}_{U_2}^{(j)}\right),\hat{\boldsymbol{u}}_I\left(x, t ; \boldsymbol{\Theta}_{U_I}^{(j)}\right)\right\}$ to make loss function $L_{M}\left(\boldsymbol{\Theta}_{U_1}^{(j)},\boldsymbol{\Theta}_{U_2}^{(j)}, \boldsymbol{\Theta}_{U_I}^{(j)}\right) \rightarrow 0 \ (j \rightarrow \infty)$. That is, there exits a large $J$, $L_{M}\left(\boldsymbol{\Theta}_{U_1}^{(j)}, \boldsymbol{\Theta}_{U_2}^{(j)}, \boldsymbol{\Theta}_{U_I}^{(j)}\right)$ will be tiny for $j \geq J$. Consequently,  Theorem  \ref{thm1} theoretically ensures Algorithm \ref{alg:algorithm-label} is effective, in which steps 4, 7, and 8 iteratively reduce the loss function.
\end{remark}
\section{Illusive Example}\label{Illusive Example}
In this section, 
we validate the performance of the proposed PISAL by using the classical two-phase Stefan problem and mixed Navier-Stokes problem. 
We employ FCNNs initialized with Xavier, utilizing hyperbolic tangent activation functions.
The optimization of parameters $\boldsymbol{\Theta}_{U_1}$, $\boldsymbol{\Theta}_{U_2}$, and $\boldsymbol{\Theta}_{U_I}$ is carried out by using 
L-BFGS, while
Adam is employed for optimizing $\lambda_1$ and $\lambda_2$.
Note that the abovementioned setups can ensure Algorithm  \ref{alg:algorithm-label} works, as mentioned in Remark \ref{231217}.
In this section, all implementations are executed within the PyTorch 3.9.0 open-source deep learning framework.
The computations are performed on a computer equipped with 32 GB memory and an NVIDIA GeForce RTX 3060 GPU.

We evaluate the prediction performance of  the proposed PISAL  by means of root mean squared error (RMSE)
\begin{equation}\notag
\rm {RMSE}=\sqrt{\frac{1}{{\rm card}\left(T_e\right)}\sum_{(\boldsymbol{x},t,\boldsymbol{u})\in T_e} \left({\hat {\boldsymbol{u}}\left(\boldsymbol{x},t\right)} -
{\boldsymbol{u}}\left(\boldsymbol{x}, t\right)\right)^{2}},
\end{equation}
where  $T_e$ is the testing dataset.
$\hat{\boldsymbol{u}}\left(\boldsymbol{x}, t\right)$ and ${\boldsymbol{u}}\left(\boldsymbol{x}, t\right)$ denote the prediction and the corresponding ground truth, respectively.
To further validate the prediction performance of PISAL, the Pearson correlation coefficient (CC)
\begin{equation}\notag
 {\rm CC} = \frac{{\rm Cov}\left(\hat {\boldsymbol{u}}\left(\boldsymbol{x},t\right), \boldsymbol{u}\left(\boldsymbol{x},t\right)\right)}{\sqrt{{\rm Var}   \left(\hat {\boldsymbol{u}}\left(\boldsymbol{x},t\right)\right)}\sqrt{{\rm Var} \left(\boldsymbol{u}\left(\boldsymbol{x}, t \right)\right)}}
\end{equation} 
is also used to measure the similarity between the prediction and the ground truth,
where ${\rm Cov}\left(\cdot,\cdot\right)$ is covariance and ${\rm Var}  (\cdot) $ is variance.
Meanwhile, the percentage error 
$$
{\rm {PE}}=\frac{{|\hat {\boldsymbol{u}}\left(\boldsymbol{x},t\right)} -
\boldsymbol{u}\left(\boldsymbol{x}, t\right)|}{\boldsymbol{u}\left(\boldsymbol{x}, t\right)}\times100\%
$$
is used to evaluate the  identification performance.  
\subsection{Two-Phase Stefan Problem}\label{section:Two-Phase Stefan Problem}
The following governing equation is first considered to generate data, 
\begin{equation}\label{eq:231120-07}
\frac{\partial u_i}{\partial t}=k_i \frac{\partial^2 u_i}{\partial x^2}, \quad x \in \Omega_i, \quad t \in[0,T], \quad i=1,2,
\end{equation}
which can be used to model the two-phase Stefan problems, such as continuum systems with phase transitions\cite{gomez2019review}, moving obstacles\cite{kim2021stefan}, multi-phase dynamics\cite{nandi2022second}, and competition for resources\cite{wang2021deep}.
We specifically use \eqref{eq:231120-07} to model the heat transfer in a two-phase.
 The temperature distributions $u_i(x,t)$ within each of the two phases satisfy the heat equation \eqref{eq:231120-07}.
Let $k_1=2$ and $k_2=1$ denote the thermal diffusivity parameters of $\Omega_1$ and $\Omega_2$, respectively.
For more details of setups, readers are referred to   \cite{cai2021physics}.
Physically, the following natural boundary conditions with respect to energy balance 
\begin{subequations}\label{eq:231120-08}
\begin{align}
& u_1(s(t), t)=u_2(s(t), t)=u^*,\label{eq:231120-08-1}\\
& s^{\prime}(t)=\alpha_1 \frac{\partial u_1}{\partial x}(s(t), t)+\alpha_2 \frac{\partial u_2}{\partial x}(s(t), t), \label{eq:231120-08-2}\\
& s(0)=s_0,\label{eq:231120-08-3}
\end{align}
\end{subequations}
can be obtained. 
$u^{*}$ in \eqref{eq:231120-08-1} is the temperature distribution on the interface.
$\alpha_1=-2$ and $\alpha_2=1$ in \eqref{eq:231120-08-2}; $s_0=1/2$ in \eqref{eq:231120-08-3}.
The  exact solutions for time-varying interface $s(t)$ and the temperature distributions   $u_1$ and $u_2$ are as follows: 
\begin{equation}
\left\{
\begin{aligned}
&u_1(x, t)=2(\exp ((t+1 / 2-x) / 2)-1) \\
&u_2(x, t)=\exp ((t+1 / 2-x)-1) \\
&s(t)=t+1 / 2
\end{aligned}
\right.,
\end{equation}
which are used to generate the training and testing datasets. Considering the practical situations, the exact thermal diffusivity parameters $k_i$ for $i=1,2$ and the time-varying interface $s(t)$ are assumed to be unknown.

In this case, $Net_1$, $Net_2$, and $Net_I$ consist of one hidden layer with 100 neurons.
A total number of training data $N_u=220$ is randomly sampled from the whole domain $\Omega$, including 20 training data sampled from the initial condition. 
Moreover,  the  collocation points $N_f=2000$ are randomly sampled to regularize the structure of \eqref{eq:231120-07} and \eqref{eq:231120-08}.
The abovementioned training dataset and the magnitude of predictions $\hat u_i(x, t)$ satisfying \eqref{eq:231120-07} are shown in Fig. \ref{fig:HeatEquation}(a). 
\begin{figure}[h]
\begin{center}
\includegraphics[width=7cm]{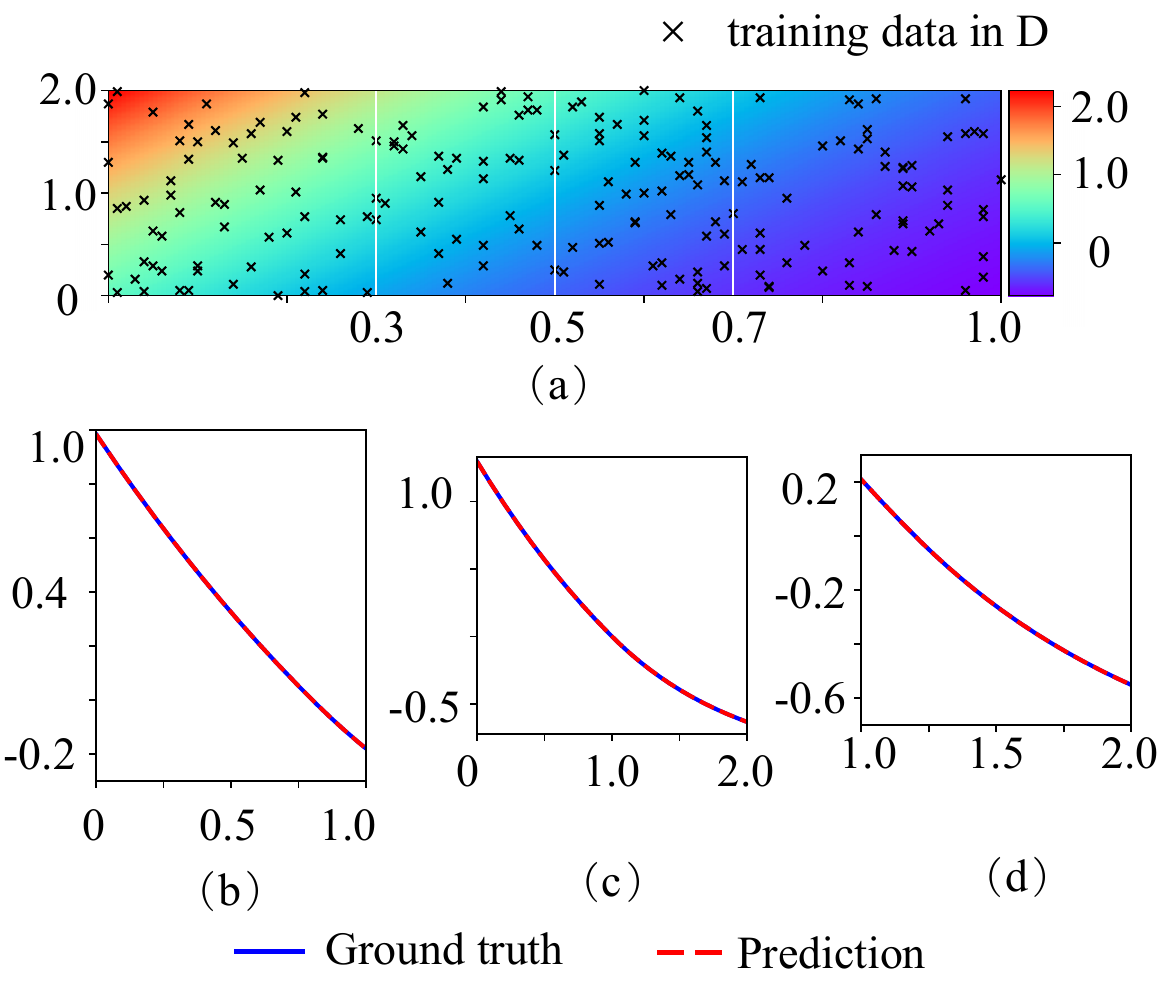}
\caption{(a) Predictions $\hat u_i\left(x,t\right)$ for \eqref{eq:231120-07}.
(b), (c), and (d) Comparisons of predictions and ground truths corresponding to fixed-time $t$=0.3, 0.5, and 0.7 snapshots depicted by dashed vertical lines in (a).}
\label{fig:HeatEquation}
\end{center}
\end{figure}
Table \ref{tb:HeatEquation_parameter} 
summarizes the results with respect to the identification performance and prediction performance.
\begin{table}[h]
\begin{center}
\caption{Comparisons for Stefan problem}
\label{tb:HeatEquation_parameter}
\setlength{\tabcolsep}{0.3mm}{
\begin{tabular}{cccccc}
\hline
\tabincell{c}{Method}&\tabincell{c}{Domain}&
\tabincell{c}{$k_{i}$} &
\tabincell{c}{PE(\%)} &
\tabincell{c}{${\rm RMSE}_{u_i}$} &
\tabincell{c}{${\rm RMSE}_{I}$} 
\\\hline
\multirow{2}{*}{\tabincell{c}{PISAL}}&$\Omega_1$&\textbf{1.997} &\textbf{0.257}& \textbf{7.261{\it e}-04}&\multirow{2}{*}{\tabincell{c}{\textbf{1.986{\it e}-02}}}\\
&$\Omega_2$&\textbf{0.997} & \textbf{0.327} & \textbf{9.268{\it e}-04}&  \\\hline
\multirow{2}{*}{\tabincell{c}{PINNs}}&$\Omega_1$&$2.151$ &40.460&2.224{\it e}-02&\multirow{2}{*}{\tabincell{c}{---}}\\
&$\Omega_2$&2.151 & 58.855 & 1.497{\it e}-02&  \\\hline
\multirow{2}{*}{\tabincell{c}{indicator-\\PINNs}}&$\Omega_1$&1.595 &15.646&1.340{\it e}-02&\multirow{2}{*}{3.700{\it e}-01}\\
&$\Omega_2$&1.588 & 115.646 & 1.943{\it e}-02&  \\\hline
\end{tabular}}
\end{center}
\end{table}
Moreover, we compare the predictions and ground truths at fixed-time $t$=0.3, 0.5, and 0.7  in Table \ref{tb:HeatEquation}, which correspond to Fig.\ref{fig:HeatEquation} (b), (c), and (d), respectively.
\begin{table}[h]
\begin{center}
\caption{Evaluation criteria with respect to temporal snapshots depicted by dashed vertical lines in Fig.~\ref{fig:HeatEquation}-(a) of different methods.}
\label{tb:HeatEquation}
\setlength{\tabcolsep}{0.4mm}{
\begin{tabular}{ccccccc}
\hline
\tabincell{c}{Method}&
\tabincell{c}{Criteria} &
\tabincell{c}{0.3} &
\tabincell{c}{0.5} &
\tabincell{c}{0.7} &
\tabincell{c}{$\left[0,2\right]\hspace{-0.6mm}\times\hspace{-0.6mm}\left[0,1\right]$}
\\\hline
\multirow{2}{*}{\tabincell{c}{PISAL}}&{\tabincell{c}{${\rm RMSE}$}}&  \bf{6.888{\it e}-04} & \bf{7.111{\it e}-04}& \bf{7.311{\it e}-04}&\bf{8.339{\it e}-04}  \\
&{\tabincell{c}{CC}} & \bf{9.999{\it e}-01} & \bf{9.999{\it e}-01}& \bf{9.999{\it e}-01} & \bf{9.999{\it e}-01}  \\\hline
\multirow{2}{*}{\tabincell{c}{PINNs}}&{\tabincell{c}{${\rm RMSE}$}}&1.308{\it e}-02&1.610{\it e}-02& 1.736{\it e}-02&1.679{\it e}-02\\
&{\tabincell{c}{CC}}&9.996{\it e}-01& 9.997{\it e}-01 & 9.996{\it e}-01&9.997{\it e}-01  \\\hline
\multirow{2}{*}{\tabincell{c}{indicator-\\PINNs}}&{\tabincell{c}{${\rm RMSE}$}}&1.903{\it e}-02&1.761{\it e}-02& 1.252{\it e}-02&1.669{\it e}-02\\
&{\tabincell{c}{CC}}&9.992{\it e}-01& 9.995{\it e}-01 & 9.998{\it e}-01&9.996{\it e}-01  \\\hline
\end{tabular}}
\end{center}
\end{table}To evaluate the performance, recent relevant methods,  PINNs\cite{raissi2019physics} and indicator-PINNs\cite{cai2021physics} are conducted for comparisons.
For fairness, the total number of training data and collocation points are kept fixed at $N_u=220$ and $N_f=2000$, respectively.
The best values are bold in Tables  \ref{tb:HeatEquation_parameter} and \ref{tb:HeatEquation} to clearly show the performance among different methods with identical setups. 
The performance of PINNs suggests that the jumping of PDE parameters through the media interface should be considered to allow for information to be communicated across the interface between $\Omega_1$ and $\Omega_2$.
The performance analysis of indicator-PINNs suggests that inaccuracies in model predictions are associated with certain issues intrinsic to the model, as discussed in \cite{cai2021physics}.

\subsection{Mixed Navier-Stokes Problem}
\label{section:5}
Furthermore, we consider the mixed Navier-Stokes problem with friction-type interface conditions to generate data. 
According to \cite{li2021schwarz}, the following 
stationary Stokes-Stokes coupling problem at each time step has to be solved
\begin{subequations}\label{eq:23112201}
\begin{align}
\hspace{-3.5mm}
\sigma \boldsymbol{u}_i\hspace{-0.5mm}-\hspace{-0.5mm}\nabla \cdot\left(\nu_i \nabla \boldsymbol{u}_i-p_i \mathbb{I}\right) & \hspace{-0.5mm}=\hspace{-0.5mm}\boldsymbol{g}_i, & & \text { in } \Omega_i, \\
\nabla \cdot \boldsymbol{u}_i & =0, & & \text { in } \Omega_i, \\ 
\hspace{-1.8mm}\left(\left(\nu_i \nabla \boldsymbol{u}_i\hspace{-0.5mm}-\hspace{-0.5mm}p_i \mathbb{I}\right) \cdot \boldsymbol{n}_i\right) \hspace{-0.5mm}\cdot\hspace{-0.5mm} \tau   &=\kappa\left(\boldsymbol{u}_j\hspace{-0.5mm}-\hspace{-0.5mm}\boldsymbol{u}_i\right) \cdot \tau, &  & \text { on } \Gamma, \\ 
\text {for }i=1,2&,\   j=3-i, \nonumber\\
\boldsymbol{u}_i \cdot \boldsymbol{n}_i & =0, & & \text { on } \Gamma, \\
\boldsymbol{u}_i & =0, &  & \text { on } \partial \Omega_i \backslash \Gamma,
\end{align}
\end{subequations}
where the vector field  $\boldsymbol{u}_i =[u_{i}^{x}, u_{i}^{y}]^{\top}$ is the velocity of the fluid; $p_i$ is the pressure;  $\nu_i$ is the viscosity parameters; $\mathbb{I}$ is the identity matrix; $\boldsymbol{n}_i$ is unit outer normals to the subdomains $\Omega_i$ at the interface $\Gamma$, respectively; 
$\boldsymbol{\tau}$ is any unit tangential vector to the interface $\Gamma$; $\kappa>0$ is the traction coefficient.
To generate the training and testing datasets to illustrate the effectiveness of the proposed PISAL,  the following setups are given:
$\Omega_1=[0,1]\times[0,1]$, $\Omega_2=[0,1]\times[-1,0]$, 
$\Gamma=y=0$, $\nu_1=0.1$, $\nu_2=0.2$, $\kappa=0.1$, and $\sigma=10$.
The coupled Stokes-Stokes system has the following exact solutions
\begin{equation}
\begin{aligned}
& u_1^x=-x^2(x-1)^2(y-1), \\
& u_1^y=x y\left(6 x+y-3 x y+2 x^2 y-4 x^2-2\right), \\
& p_1=\cos (\pi x) \sin (\pi y), \\
& u_2^x=-\frac{\nu_1}{\nu_2} x^2(x-1)^2 y+\left(\frac{\nu_1}{\kappa}+1\right) x^2(x-1)^2, \\
& u_2^y=\hspace{-1mm}(2 x-1) \frac{\nu_1}{\nu_2} x(x-1) y^2\hspace{-0.5mm}-\hspace{-0.5mm}(2 x-1)\left(2 \frac{\nu_1}{\kappa}+2\right) x(x-1) y, \\
& p_2=\cos (\pi x) \sin (\pi y) .
\end{aligned}
\end{equation} 
The force terms $\boldsymbol{g}_i $ and the boundary conditions are then calculated by inserting the exact solutions into \eqref{eq:23112201}.
Considering the practical situations, the exact interface $\Gamma$ and the viscosity parameters $\nu_i$ are assumed to be unknown.

In this case, 
 both $Net_1$ and $Net_2$ consist of two hidden layers with 90 neurons individually; $Net_I$ consists of one hidden layer with 100 neurons.
A total number of training data $N_u=1000$ and collocation points $N_f=1000$ are randomly sampled from the whole domain 
$\Omega$.
The magnitude of predictions $\boldsymbol{\hat u}_i(x,y)$ and $\hat p_i(x, y)$ are shown in Fig. \ref{fig:NS}(a) and Fig. \ref{fig:NS}(b), respectively; where $\boldsymbol{\hat u}_i(x,y)=\sqrt {\hat {{u}_i^{x}}^{2}+\hat {{u}_i^{y}}^{2}}$.
\begin{figure}[h]
\begin{center}
\includegraphics[width=7cm]{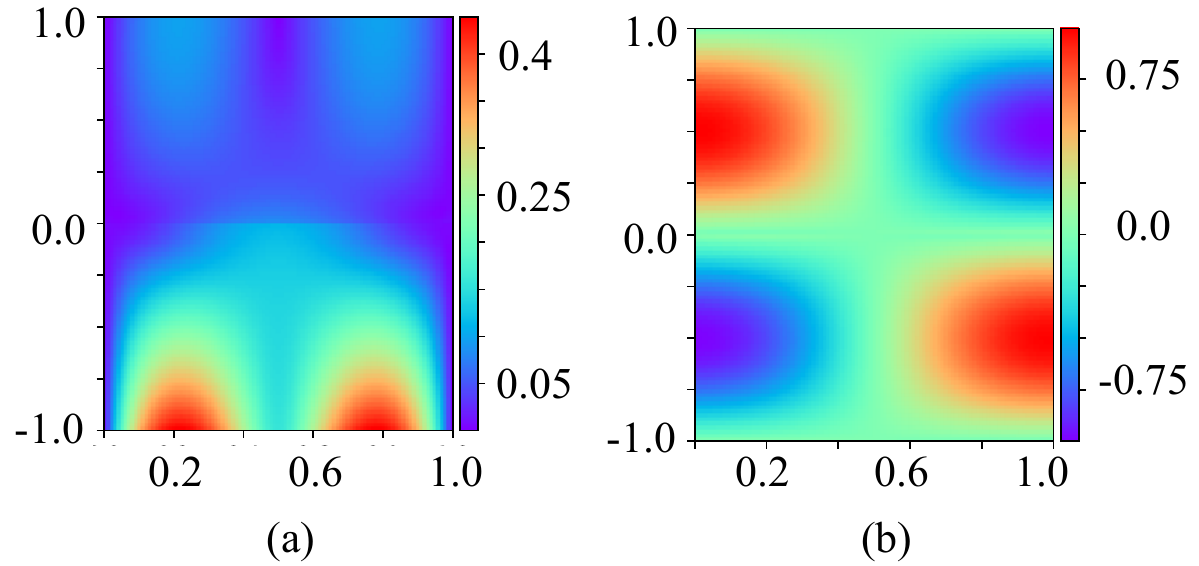}
\caption{Predictions for Navier-Stokes. (a) Velocity. (b) Pressure.}
\label{fig:NS}
\end{center}
\end{figure}
Moreover, the friction-type interface $\Gamma$ can be accurately identified. 
Similar to Section \ref{section:Two-Phase Stefan Problem}, PINNs and indicator-PINNs
are conducted as benchmarks to evaluate  the performance of the proposed PISAL. The best values are bold in Table \ref{tb:NS}, which indicate our proposed PISAL outperforms the benchmarks. 
\begin{table*}[h]
\begin{center}
\caption{Comparisons for mixed Navier-Stokes}
\label{tb:NS}
\setlength{\tabcolsep}{2mm}{
\begin{tabular}{cccccc|ccc}
\hline
\tabincell{c}{Method}&
$\Omega_i$
&
\tabincell{c}{Criteria} &
\tabincell{c}{$u_i^x$} &
\tabincell{c}{$u_i^y$} &
\tabincell{c}{$p$} &
\tabincell{c}{$\nu_i$}
&
\tabincell{c}{PE(\%)}
\\\hline
\multirow{4}{*}{\tabincell{c}{PISAL}}&\multirow{2}{*}{\tabincell{c}{$\Omega_1$}}&{\tabincell{c}{${\rm RMSE}$}}&   \textbf{2.343{\it e}-03} & \textbf{9.116{\it e}-04}& \textbf{7.387{\it e}-03}&\multirow{2}{*}{\textbf{0.0989}} & \multirow{2}{*}{\textbf{0.112}}\\
& &{\tabincell{c}{CC}} & \textbf{9.999{\it e}-01} & \textbf{9.999{\it e}-01}& \textbf{9.999{\it e}-01}& & \\  \cmidrule(r){2-8} 
&\multirow{2}{*}{\tabincell{c}{$\Omega_2$}}&{\tabincell{c}{${\rm RMSE}$}}&   \textbf{2.218{\it e}-03} & \textbf{7.111{\it e}-04}& \textbf{7.311{\it e}-04}&\multirow{2}{*}{\textbf{0.1982}}&\multirow{2}{*}{\textbf{0.179}}  \\
& &{\tabincell{c}{CC}} & \textbf{9.999{\it e}-01} & \textbf{9.999{\it e}-01}& \textbf{9.999{\it e}-01}& &\\\hline
\multirow{4}{*}{\tabincell{c}{PINNs}}&\multirow{2}{*}{\tabincell{c}{$\Omega_1$}}&{\tabincell{c}{${\rm RMSE}$}}&   5.041{\it e}-03 & 4.631{\it e}-03& 2.742{\it e}-02&\multirow{2}{*}{0.0946} & \multirow{2}{*}{10.543}\\
& &{\tabincell{c}{CC}} & 9.736{\it e}-01 & 9.967{\it e}-01& 9.985{\it e}-01& & \\  \cmidrule(r){2-8} 
&\multirow{2}{*}{\tabincell{c}{$\Omega_2$}}&{\tabincell{c}{${\rm RMSE}$}}&   5.501{\it e}-03 & 2.364{\it e}-03& 1.256{\it e}-02&\multirow{2}{*}{0.0946}&\multirow{2}{*}{0.543}  \\
& &{\tabincell{c}{CC}} & 9.944{\it e}-01 & 9.999{\it e}-01& 9.996{\it e}-01& &\\\hline
\multirow{4}{*}{\tabincell{c}{indicator-\\PINNs}}&\multirow{2}{*}{\tabincell{c}{$\Omega_1$}}&{\tabincell{c}{${\rm RMSE}$}}&   2.685{\it e}-03 &2.463{\it e}-03& 3.827{\it e}-03&\multirow{2}{*}{0.1111} & \multirow{2}{*}{1.111}\\
& &{\tabincell{c}{CC}} & 9.988{\it e}-01 & 9.999{\it e}-01& 9.997{\it e}-01& & \\  \cmidrule(r){2-8} 
&\multirow{2}{*}{\tabincell{c}{$\Omega_2$}}&{\tabincell{c}{${\rm RMSE}$}}&   2.393{\it e}-03 &1.602{\it e}-03& 1.119{\it e}-03&\multirow{2}{*}{0.1936} & \multirow{2}{*}{0.314}\\
& &{\tabincell{c}{CC}} & 9.988{\it e}-01 & 9.999{\it e}-01& 9.997{\it e}-01& &\\\hline
\end{tabular}}
\end{center}
\end{table*}
\vspace{-2mm}
\subsection{Discussion}\label{Discussion}
\subsubsection{Approximation ability}
The  approximation ability of the proposed PISAL has been theoretically proved in Section \ref{Subsection II-D}.
Take the Stefan problem to illustrate the practical applications. In this case, the value of ${{\rm {MSE}}_M}$ is (1.634{\it e}-06+1.047{\it e}-06+1.439{\it e}-08)+(9.398{\it e}-07+1.130{\it e}-06+6.084{\it e}-08)  when meeting the stop criterion in Algorithm \ref{alg:algorithm-label}.
It can be seen that there exists a set of $\left\{{\boldsymbol{\Theta}}_{U_1}^{(j)},{\boldsymbol{\Theta}}_{U_2}^{(j)},{\boldsymbol{\Theta}}_{U_I}^{(j)}\right\}$ to realize the loss $L_{M} \left({\boldsymbol{\Theta}}_{U_1}^{(j)},{\boldsymbol{\Theta}}_{U_2}^{(j)},{\boldsymbol{\Theta}}_{U_I}^{(j)}\right)\rightarrow 0\ (j\rightarrow\infty)$.  
The temperature distributions satisfying \eqref{eq:231120-07} and \eqref{eq:231120-08}  can be obtained when the proposed PISAL is well-trained achieving $\left\{\hat{{\boldsymbol{\Theta}}}_{U_1}^{(j)},{\hat{\boldsymbol{\Theta}}}_{U_2}^{(j)},{\hat{\boldsymbol{\Theta}}}_{U_I}^{(j)}\right\}$ by using the SAL strategy. 
Furthermore, $\Vert\hat{u}_i\left(\boldsymbol{x}, t, \hat{\boldsymbol{\Theta}}_{U_i}^{(j)}; k_i\right)-u_i\Vert<\varepsilon$ for all $\varepsilon>0$ can be obtained according to the property of $u_i$ in Theorem \ref{thm1} and Remark \ref{231217}.
That is,  the thermal diffusivity parameters $k_i$ and the time-varying interface $s(t)$ can be synchronously obtained.
\subsubsection{Complexity analysis}
Complexity analysis is a common and comparable performance criterion used to express
the computational efficiency of a model\cite{geng2021novel, 9701488}.
To further evaluate the efficiency performance, PINNs and indicator-PINNs are conducted as benchmarks.
We use the big $O$ notation method to express the computational complexity.
The complexity analyses of the proposed PISAL and 
benchmarks are shown in Table \ref{tb:complexity}, which indicate the comparison methods are in similar scales.
\begin{table}[h]
\begin{center}\vspace{-1mm}
\caption{Comparisons for complexity analysis}
\vspace{-2mm}
\label{tb:complexity}
\setlength{\tabcolsep}{0.5mm}{
\begin{tabular}{ccccccc}
\hline
\tabincell{c}{Method}&
\tabincell{c}{$Net_1$} &
\tabincell{c}{$Net_2$} &
\tabincell{c}{$Net_I$} &
\tabincell{c}{Computational complexity}
\\\hline
{\tabincell{c}{PISAL}}&$O(l_1  d_1)$&  $O(l_2  d_2)$& $O(l_3  d_3)$&$O(l_1  d_1+l_2  d_2+l_3  d_3)$ \\\hline
PINNs&$O(l_1  d_1)$&--&--&$O(l_1  d_1)$
\\\hline
{\tabincell{c}{indicator-\\PINNs}}&$O(l_1  d_1)$& --& $O(l_3  d_3)$&$O(l_1  d_1+l_3  d_3)$\\\hline
\end{tabular}}
\end{center}
\end{table}
Here $l_1$, $l_2$, and $l_3$ represent the number of $Net_1$ layers, $Net_2$ layers, and $Net_I$ layers, respectively.
 $d_1$, $d_2$, and $d_3$ represent the number of $Net_1$ hidden vector dimension, $Net_2$ hidden vector dimension, and $Net_I$  hidden vector dimension, respectively.
Note that  PINNs and its variants solve PDEs in small data regimes.
Thus, the performance with respect to model accuracy is our investigative focus when the difference with respect to model complexity is indistinguishable and ample computing power is available.
Note that predictions can be obtained by using the well-trained PISAL on our computer in less than 1 second. 
\subsubsection{Impact of Neural Network  Size and Training Data Number on Performance}\label{size and number}
The performance of our proposed PISAL is further scrutinized by conducting tests with respect to the different network sizes and the total number of training data.
Taking the mixed Navier-Stokes problem as an example, the results are summarized in Tables  \ref{tb:networkSize} and
 \ref{tb:numberData}.
\begin{table*}[htbp]
 \begin{minipage}{\textwidth}
\begin{center}
\caption{PE in identified parameters $\hat \nu_1$ and $\hat\nu_2$ for different numbers of hidden layers and neurons per layer. Here, the training data and collocation points are considered to be  fixed to $N_u = 1000$ and $N_f=2000$.}
\vspace{2mm}
\label{tb:networkSize}
\setlength{\tabcolsep}{1.5mm}{
\begin{tabular}{c|ccccccccccccccc}
\hline
&\multicolumn{4}{c}{$\nu_1 (\%)$}&\multicolumn{4}{c}{$\nu_2 (\%)$}
\\ \cmidrule(r){1-5} \cmidrule(r){6-9}
\diagbox{Layers}{Neurons}&
\tabincell{c}{70} &
\tabincell{c}{90} &
\tabincell{c}{105} &
\tabincell{c}{120}&
\tabincell{c}{70} &
\tabincell{c}{90} &
\tabincell{c}{105} &
\tabincell{c}{120}
\\ \cmidrule(r){1-5} \cmidrule(r){6-9}
{1}&  0.4324 & 0.3249 & 0.0187 &  0.1127 &0.3912 & 0.0381&  0.1553 &0.1081 &    \\ \cmidrule(r){1-5} \cmidrule(r){6-9} 
2&   0.1853 &  0.1790  &0.0371  &0.0899&0.2671 &0.112 &0.1184 &0.0256 \\ \cmidrule(r){1-5} \cmidrule(r){6-9} 
3& 0.0717 &  0.0999 & 0.1403&0.0570 &0.3380&0.2554  &0.0071 & 0.0289 \\ \cmidrule(r){1-5} \cmidrule(r){6-9} 
4& 0.1290 & 0.1276 &0.1156&0.0363&0.0742&0.0656&0.0027&0.1059     \\\hline 
\end{tabular}}
\end{center}
 \end{minipage}
\end{table*}
\begin{table*}[htbp]
 \begin{minipage}{\textwidth}
\begin{center}
\caption{PE in identified parameters $\hat\nu_1$ and $\hat\nu_2$ for different numbers of training data $N_u$ and $N_f$. Here, the neural network size is kept fixed to two layers and 90 neurons per layer.}
\vspace{2mm}
\label{tb:numberData}
\setlength{\tabcolsep}{1.8mm}{
\begin{tabular}{c|ccccccccccccccc}
\hline&\multicolumn{4}{c}{$\nu_1 (\%)$}&\multicolumn{4}{c}{$\nu_2 (\%)$}\\ \cmidrule(r){1-5} \cmidrule(r){6-9} 
\diagbox{$N_u$}{$N_f$}&500&1000&1500&2000&500&1000&1500&2000
\\ \cmidrule(r){1-5} \cmidrule(r){6-9} 
{500}&   0.3337 & 0.1786  &0.2004 & 0.2412&  0.5279 & 0.5913  &0.6419 &0.5650  \\ \cmidrule(r){1-5} \cmidrule(r){6-9} 
1000& 0.3256 & 0.2655 &  0.1012&0.1532&0.8829&0.4312&0.5834&0.4364  \\ \cmidrule(r){1-5} \cmidrule(r){6-9}  
1500&   0.5017 & 0.2137&  0.1462&0.2311&0.3421&0.2974 &0.3559&0.3071 \\ \cmidrule(r){1-5} \cmidrule(r){6-9}  
2000&  0.3706 & 0.1057 & 0.2252 &0.2830&0.6146&0.5401&0.7854&0.8329  \\\hline
\end{tabular}}
\end{center}
 \end{minipage}
\end{table*}The performance of the proposed PISAL with respect to different network sizes is shown in Table \ref{tb:networkSize}.
Here, the training data and collocation points are fixed to $N_u = 1000$ and $N_f = 2000$, respectively.
Note that the widening and deepening of the networks
result in identification performance improvement.
However,
 the trend is not strictly monotonous, which could be attributed to numerous reasons, such as the equation itself as well as the particular neural network setups.
 The performance of the proposed PISAL with respect to different numbers of training data $N_u$ and collocation points $N_f$ is shown in Table \ref{tb:numberData}. Here, the network size are fixed to two layers with 90 neurons per layer. 
There are some variabilities and non-monotonic trends, which could be attributed to the neural network nonconvexity or some factors pertaining to the network structure or equation itself.
\section{Conclusion}\label{Conclusion}
This article advocated for applying PISAL to address the intricacies of industrial systems modeling, including unknown PDE parameters due to a lack of physical attributions and an unavailable  time-varying interface due to operating in heterogeneous media. 
First,
$Net_1$, $Net_2$, and  $Net_I$ is constructed for approximating the soutions satisfying PDEs and the time-varying interface.
$Net_1$ and $Net_2$ are utilized to synchronously learn each solution satisfying PDEs with diverse parameters, while 
$Net_I$ is employed to adaptively learn the unavaliable time-varying interface.
Then, 
a criterion combined with $Net_I$  is proposed, which is used to adaptively distinguish the attributions of measurements and colocation points.
Furthermore, $Net_I$ is integrated into a data-physics-hybrid loss function.
Accordingly, a SAL strategy is proposed for decomposing and optimizing  each subdomain.
Besides, the approximation capability of the proposed PISAL is theoretically proved.
Finally, two illustrative examples are conducted to validate the effectiveness and feasibility of our proposed method. Meanwhile, some comparisons with relevant state-of-the-art methods and discussions are given.
We believe that the proposed PISAL can be expected to benefit practitioners, such as nondestructive testing, soft sensing, and model predictive control.
Note that the networks of PISAL are not limited to  FCNNs, convolutional neural networks, recurrent neural networks, etc., which are determined by the practical situations.

In the future, we will continue to use PISAL for more complex practical situations, such as unknown-driven sources.
Meanwhile, we intend to conduct the proposed PISAL on an experimental platform to further verify the feasibility in practical scenarios.
Some variabilities and non-monotonic trends, shown in Tables \ref{tb:networkSize} and \ref{tb:numberData},  are also worth further investigation, as we discussed in Section \ref{size and number}.
\bibliographystyle{IEEEtran}
\bibliography{myref}

\end{document}